\DeclareRobustCommand\onedot{\futurelet\@let@token\@onedot}
\def\@onedot{\ifx\@let@token.\else.\null\fi\xspace}
\def\eg{\emph{e.g}\onedot} 
\def\ie{\emph{i.e}\onedot}
\def\etal{\emph{et al}\onedot}
\def\Vec#1{{\boldsymbol{#1}}}
\def\Mat#1{{\boldsymbol{#1}}}
\def\GRASS#1#2{\mathcal{G}({#1},{#2})}
\def\SPD#1{\mathcal{S}_{++}^{#1}}
\newcommand{\tr}{\mathop{\rm  Tr}\nolimits}
\def\cite{\Citep}
\begin{document}

%%%%%%%%% TITLE
\title{Beyond Gauss: Image-Set Matching on the Riemannian \\Manifold of PDFs}

\author{\name Mehrtash Harandi \email mehrtash.harandi@nicta.com.au \\
       \addr NICTA and Australian National University\\
       Canberra, Australia
       \AND
       \name Mathieu Salzmann \email mathieu.salzmann@nicta.com.au \\
   	   \addr NICTA and Australian National University\\
       Canberra, Australia
       \AND
       \name Mahsa Baktashmotlagh \email m.baktashmotlagh@qut.edu.au \\
   	   \addr Queensland University of Technology\\
       Brisbane, Australia       }

\maketitle
%\thispagestyle{empty}

%%%%%%%%% ABSTRACT
\begin{abstract}

State-of-the-art image-set matching techniques typically implicitly model each image-set with a Gaussian distribution.
Here, we propose to go beyond these representations and model image-sets as probability distribution functions (PDFs) using kernel density estimators. 
To compare and match image-sets, we exploit Csisz\'{a}r $f$-divergences, which bear strong connections to the
geodesic distance defined on the space of PDFs, \ie, the statistical manifold. Furthermore, 
we introduce valid positive definite kernels on the statistical manifolds, which let us make use of more powerful classification schemes to match image-sets. 
Finally, we introduce a supervised dimensionality reduction technique that learns a latent space where $f$-divergences reflect the class labels of the data. Our experiments on diverse problems, such as video-based face recognition and dynamic texture classification, evidence the benefits of our approach over the state-of-the-art  image-set matching methods.

\end{abstract}

\section{Statistical Manifolds and $f$-Divergences}
\label{sec:background}

In this paper, we rely on probability density functions (PDFs) and on the distances between them to analyze image-sets. In this section, we therefore review some concepts related to the geometry of the space of PDFs.

Let $\mathcal{X}$ be a set. A  (PDF) on $\mathcal{X}$ is a function $p:\mathcal{X} \to \mathbb{R}^+$ 
such that $\int_\mathcal{X} p(x)dx = 1$. Let $\mathcal{M}$ be a family of PDFs on the set $\mathcal{X}$.
{
With certain assumptions (\eg, differentiability), $\mathcal{M}$ forms a Riemannian structure, \ie, 
a differentiable manifold equipped with a Riemannian metric. 
The Riemannian metric enables us to measure the length of curves\footnote{%
On a Riemannian manifold, the geodesic distance between two points corresponds to the length of the shortest path on the manifold between them.
}.
}

The Fisher-Rao metric~\cite{Rao_1945} is undoubtedly the 
most common Riemannian metric to analyze $\mathcal{M}$.
Unfortunately, an analytic form of geodesic distance induced by the Fisher-Rao metric can only be obtained for specific
distributions, such as Gaussians, or the exponential family~\cite{Rao_1945}\footnote{Not to be confused with exponential distributions, or distributions generally expressed as sums of exponentials.}.
In other words, estimating geodesic distances between general distributions, such as the ones we use here, is impractical. To address this issue, here, we propose to compare PDFs with two $f$-divergences, which, as discussed later, have strong connections with the geodesic distance.

Formally, a Csisz\'{a}r $f$-divergence is a function of two probability distributions that measures their similarity, and is defined as
\begin{equation}
\delta_f(p \Vert q) = \int f\left(\frac{p}{q}\right) dq\;,
\label{eqn:f_divergence}
\end{equation}
where $f$ is a convex function on $(0,\infty)$ with $f(1) = 0$. Different choices of $f$ yield different divergences. Below, and in the rest of this paper, we focus on two special cases, which induce the Hellinger distance and the Jeffrey divergence, respectively.

The Hellinger distance can be obtained by choosing $f(t) = (\sqrt{t} -1)^2$ in Eq.~\ref{eqn:f_divergence}, and is defined below.
\begin{definition}
The Hellinger distance between two probability distributions $p$ and $q$ is defined as 
\begin{equation}
	\delta_{H}^2(p \Vert q) = \int \Big(\sqrt{p(\Vec{x})} -\sqrt{q(\Vec{x})}\Big)^2d\Vec{x}\;.
	\label{eqn:Hellinger_dist}
\end{equation}
\end{definition}

If, instead, we set $f(t) = t\ln(t) - \ln(t)$ in Eq.~\ref{eqn:f_divergence}, we obtain the Jeffrey divergence defined below.
\begin{definition}
The Jeffrey or symmetric KL divergence between two probability distributions $p$ and $q$ is defined as 
\begin{equation}
	\delta_{J}(p \Vert q) = %\frac{1}{2}
	\int \Big(p(\Vec{x}) - q(\Vec{x})\Big)\ln{\frac{p(\Vec{x})}{q(\Vec{x})}}d\Vec{x}\;.
	\label{eqn:symm_KL_dist}
\end{equation}
\end{definition}

From a geometrical point of view, the Riemannian structure induced by the Hellinger distance is different from the one 
induced by the J-divergence. However, these two divergences share the property that their respective Riemannian metrics can be obtained from the Fisher-Rao metric (see Thereom~5 in~\cite{Amari_2010}). Furthermore, in~\cite{Baktashmotlagh_CVPR_2014}, it was shown that the length of any given curve is the same under the Hellinger distance and under the Fisher-Rao metric up to scale. These two properties therefore relate these divergences to the geodesic distance on the statistical manifold, and thus make them an attractive alternative to compare PDFs.
%We note that Hellinger distance is bounded ($\delta_{H}^2(p \Vert q) \leq 2$) while the J-divergence is not. 
Another important property of these two divergences is given by the following theorem.
{
\begin{theorem}%[Theorem.1 in~\cite{Qiao_TSP_2010}]
The Hellinger distance and the Jeffrey divergence between two distributions are invariant under differentiable and invertible (diffeomorphism) transformations.
\label{thm:invariance_f_div}
\end{theorem}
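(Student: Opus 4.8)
The plan is to prove the stronger and cleaner fact that \emph{every} Csisz\'{a}r $f$-divergence of the form in Eq.~\ref{eqn:f_divergence} is invariant under a diffeomorphism, and then to obtain Theorem~\ref{thm:invariance_f_div} for free by specialising to the two choices $f(t)=(\sqrt{t}-1)^2$ and $f(t)=t\ln t-\ln t$, which (as one can verify by direct substitution into Eq.~\ref{eqn:f_divergence}) generate $\delta_H^2$ and $\delta_J$ respectively. This reduces the whole statement to a single change-of-variables computation, avoiding any distribution-specific manipulation.

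Concretely, I would let $\phi:\mathcal{X}\to\mathcal{Y}$ be the diffeomorphism and write $\Vec{y}=\phi(\Vec{x})$. Under such a transformation the PDFs transform by the standard change-of-variables rule, giving the pushed-forward densities
\begin{equation}
\tilde{p}(\Vec{y}) = p\big(\phi^{-1}(\Vec{y})\big)\,\big|\det J_{\phi^{-1}}(\Vec{y})\big|, \qquad \tilde{q}(\Vec{y}) = q\big(\phi^{-1}(\Vec{y})\big)\,\big|\det J_{\phi^{-1}}(\Vec{y})\big|,
\end{equation}
where $J_{\phi^{-1}}$ is the Jacobian of $\phi^{-1}$. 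The pivotal observation is that the Jacobian determinant enters $\tilde{p}$ and $\tilde{q}$ as the \emph{same} factor, so it cancels in their ratio: $\tilde{p}(\Vec{y})/\tilde{q}(\Vec{y}) = p(\Vec{x})/q(\Vec{x})$. This is precisely why $f$-divergences, which see the two densities only through this ratio, are the right objects for an invariance result.

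The main computation then proceeds by substituting the transformed densities into $\delta_f(\tilde{p}\Vert\tilde{q})=\int f(\tilde{p}/\tilde{q})\,\tilde{q}\,d\Vec{y}$. By the cancellation above, the argument of $f$ becomes $p(\Vec{x})/q(\Vec{x})$, while the integrating measure recombines as $\tilde{q}(\Vec{y})\,d\Vec{y} = q(\phi^{-1}(\Vec{y}))\,|\det J_{\phi^{-1}}(\Vec{y})|\,d\Vec{y} = q(\Vec{x})\,d\Vec{x}$. Changing variables from $\Vec{y}$ back to $\Vec{x}$ yields $\delta_f(\tilde{p}\Vert\tilde{q}) = \int f(p/q)\,q\,d\Vec{x} = \delta_f(p\Vert q)$, the desired invariance for arbitrary admissible $f$; plugging in the two specific $f$ completes the argument.

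I expect no genuine conceptual obstacle here; the difficulty is purely one of careful bookkeeping. In particular, I would be careful to carry the \emph{absolute value} $|\det J_{\phi^{-1}}|$ throughout, since a diffeomorphism may be orientation-reversing, and to keep it manifestly identical for $p$ and $q$ so that it cancels cleanly rather than surviving as a spurious weight. The diffeomorphism hypothesis guarantees that $\phi^{-1}$ is differentiable with nonvanishing Jacobian, so the change-of-variables formula applies and no regularity beyond that already implicit in the definition of $\delta_f$ is required.
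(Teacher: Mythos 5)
Your proof is correct and follows exactly the change-of-variables argument that the paper sketches in the discussion after the theorem statement (the relation $p_i(\Vec{x})\,d\Vec{x}=q_i(\Vec{y})\,d\Vec{y}$ together with the Jacobian factor $|\mathcal{J}(\Vec{x})|$); the paper itself never writes out the computation and instead defers to cited references, which establish the result in the same way for general Csisz\'{a}r $f$-divergences. Your strategy of proving invariance for arbitrary admissible $f$ via cancellation of the Jacobian in the ratio $\tilde{p}/\tilde{q}$ and recombination of $\tilde{q}\,d\Vec{y}$ into $q\,d\Vec{x}$, then specialising to $f(t)=(\sqrt{t}-1)^2$ and $f(t)=t\ln t-\ln t$, is precisely the standard route and is sound.
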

That is, given two distributions $p_1(\Vec{x})$ and $p_2(\Vec{x})$ in space $\mathcal{X}; \Vec{x} \in \mathcal{X}$, 
let $h : \mathcal{X} \to \mathcal{Y}$ be a differentiable and invertible function that converts $\Vec{x}$ into
$\Vec{y}$. Under function $h$, we have $p_i(\Vec{x})d\Vec{x} = q_i(\Vec{y})d\Vec{y}; i \in \{1,2\}$ and
$d(\Vec{y}) = |\mathcal{J}(\Vec{x})| d\Vec{x}$ where 
$|\mathcal{J}(\Vec{x})|$ denotes the determinant of the Jacobian matrix $h$. The above invariance property states that
\begin{equation*}
\delta_f(p_1(\Vec{x}),p_2(\Vec{x})) = \delta_f(q_1(\Vec{y}),q_2(\Vec{y})).
\end{equation*}
}
The proof of this theorem can be found in several recent studies (\eg, see Theorem.1 in~\cite{Qiao_TSP_2010}). It has also been known to mathematicians for decades~\cite{Ali_1966}.
Invariance to diffeomorphism seems an attractive property in the context of computer vision, and in particular for image-set matching, since images in a set can typically be subject to many variations,
such as changes of illumination or of
environment/capture conditions.
Furthermore, we will exploit this property when deriving our dimensionality reduction method in Section~\ref{sec:dim_red}. Note that the affine invariance of some metrics on the SPD manifold, which has made such metrics  popular, is a lesser form of this property. In other words, the $f$-divergences are invariant to a broader set of transformations, including affine ones. 

Before concluding this section, it is worth mentioning 
the following relationship.

\begin{remark}
The Bhattacharya distance between two distributions is defined as $\delta_B(p \Vert q) = -\ln(B(p,q))$ where 
$B(\cdot,\cdot)$ is the Bhattacharya coefficient defined as 
\begin{equation*}
B(p,q) = \int \sqrt{p(\Vec{x})q(\Vec{x})}d\Vec{x}\;.
\end{equation*}
The Bhattacharya coefficient and the Hellinger distance are related through $\delta_H^2(p\Vert q) = 2 - 2B(p,q)$. Interestingly, the Bhattacharya distance between two zero mean multivariate normal distribution $p = \mathcal{N}(\Vec{0},\Mat{\Sigma}_p)$
and $q = \mathcal{N}(\Vec{0},\Mat{\Sigma}_q)$ is proportional to the Stein divergence between their corresponding covariance 
matrices. More specifically, $\delta_B(p,q) = 0.5 S(\Mat{\Sigma}_p,\Mat{\Sigma}_q)$ which can be proved by direct 
insertion. The Stein divergence has been successfully used in addressing several problems in vision~\cite{Cherian_PAMI_2013,Zhang_AAAI_2015}.
\end{remark}

%\section{Practical Considerations}
\section{Image-Sets as PDFs}
\label{sec:practical_considerations}

We now introduce our approach to modeling image-sets as PDFs. To this end, let $\{\Vec{x}_i\}_{i=1}^{n}$ be a set of $n$ images, where each $\Vec{x}_i \in \mathbb{R}^D$ is a feature vector describing one image in the set. We propose to make use of Kernel Density Estimation (KDE) to obtain a fine-grained estimate $\hat{p}(\Vec{x})$ of the distribution $p(\Vec{x})$ of the features. This estimate can be written as 
\begin{footnotesize}
\begin{equation}
\hat{p}(\Vec{x}) = \frac{1}{n\sqrt{\det(2\pi \Sigma)}}\sum_{i = 1}^{n} 
\exp \bigg(-\frac{1}{2} \Big(\Vec{x} - \Vec{x}_i \Big)^T \Sigma^{-1} \Big(\Vec{x} - \Vec{x}_i \Big) \bigg)\;.
\label{eqn:kde_px}
\end{equation}
\end{footnotesize}

Given two image-sets $\{\Vec{x}_i^{(p)}\}_{i=1}^{n_p}$ and $\{\Vec{x}_i^{(q)}\}_{i=1}^{n_q}$, Eq.~\ref{eqn:kde_px} provides us with the means to estimate their respective PDFs $p(\Vec{x})$ and $q(\Vec{x})$. We then aim to compare these image-sets by computing the statistical distances between $\hat{p}(\Vec{x})$ and $\hat{q}(\Vec{x})$. As discussed in Section~\ref{sec:background}, we propose to rely on $f$-divergences to compare $p(\Vec{x})$ and $q(\Vec{x})$. Note, however, that the integrals corresponding to the Hellinger distance and to the J-divergence do not have an analytic solution for our KDE representation. Therefore, below, we derive solutions to compute a robust estimate of these two divergences.

\subsection{Empirical $f$-Divergences}

Let us first consider the case of the Hellinger distance. Given two sets of samples $\{\Vec{x}_i^{(p)}\}_{i=1}^{n_p}$ and $\{\Vec{x}_i^{(q)}\}_{i=1}^{n_q}$ drawn from $p(\Vec{x})$ and $q(\Vec{x})$, respectively, directly estimating the integral of Eq.~\ref{eqn:Hellinger_dist} is not straightforward. To make this easier, one can rewrite Eq.~{\ref{eqn:Hellinger_dist} as
\begin{align}
\delta_H^2(p \Vert q) &= \int \bigg(1 - \sqrt{\frac{q(\Vec{x})}{p(\Vec{x})}} \bigg)^2 p(\Vec{x}) d\Vec{x} \\
&= E_{p} \bigg(1 - \sqrt{\frac{q(\Vec{x})}{p(\Vec{x})}} \bigg)^2\;,
\label{eqn:est_hellinger_px}
\end{align}
where $E_{p}(\cdot)$ denotes the expectation under $p$. Following the strong law of large numbers, as is commonly done in practice, such an expectation can then be estimated as
\begin{align}
\hat{\delta}_H^2(p \Vert q) &= \frac{1}{n_p}\sum_i^{n_p} 
\bigg(1 - \sqrt{\frac{\hat{q}(\Vec{x}_i^{(p)}))}{\hat{p}(\Vec{x}_i^{(p)})}} \bigg)^2 \;,
\label{eq:est_hell_px}
\end{align}
with $\hat{p}(\cdot)$ and $\hat{q}(\cdot)$ obtained by KDE. 
Unfortunately, such an estimate would in general be different if one had chosen to make use of $q(\Vec{x})$ instead of $p(\Vec{x})$ to derive the expectation of Eq.~\ref{eqn:est_hellinger_px}. This implies that the resulting estimate of the Hellinger distance would be asymmetric, and thus poorly-suited to our goals.
It is now clear that 
if the realizations of $q(\Vec{x})$, \ie $\{\Vec{x}_l^{(q)}\}_{l=1}^{n_q}$ are used to estimate $\delta_H^2(p \Vert q)$,
a different result is attained as in this case we have 
$\hat{\delta}_H^2(p \Vert q) = E_{q} \bigg(1 - \sqrt{\frac{p(\Vec{x})}{q(\Vec{x})}} \bigg)^2$ and hence

\begin{align}
\hat{\delta}_H^2(p \Vert q) &= \frac{1}{n_q}\sum_i^{n_q} 
\bigg(1 - \sqrt{\frac{\hat{p}(\Vec{x}_i^{(q)})}{\hat{q}(\Vec{x}_i^{(q)})}} \bigg)^2 \;.
\label{eqn:est_hellinger_qx}
\end{align}

While one could compute the average of Eqn.~\eqref{eqn:est_hellinger_px} and Eqn.~\eqref{eqn:est_hellinger_qx} to obtain 
a less biased and symmetric estimation of $\delta_H^2(p \Vert q)$, we follow the approach of~\cite{Carter_2009} to obtain a more robust estimate of the Hellinger distance. More specifically, we rewrite $\delta_H^2(p \Vert q)$ as 
\begin{footnotesize}
\begin{align}
&\delta_H^2(p \Vert q) = \hspace{-1ex}\int 
\hspace{-1ex}\bigg(\sqrt{\frac{p(\Vec{x})}{p(\Vec{x}) + q(\Vec{x})}} - \sqrt{\frac{q(\Vec{x})}{p(\Vec{x}) + q(\Vec{x})}} \bigg)^2 
\hspace{-1ex}(p(\Vec{x}) + q(\Vec{x})) d\Vec{x} \notag \\
&= E_{p} \Big(\sqrt{T(\Vec{x})} - \sqrt{1 - T(\Vec{x})}\Big)^2 + 
E_{q} \Big(\sqrt{T(\Vec{x})} - \sqrt{1 - T(\Vec{x})}\Big)^2\;,
\label{eqn:hellinger_cont}
\end{align}
\end{footnotesize}
with %$0 \leq T(\Vec{x}) \leq 1$ being
\begin{equation}
T(\Vec{x}) = \frac{p(\Vec{x})}{p(\Vec{x}) + q(\Vec{x})}\;.
\label{eqn:t_x}
\end{equation}

Given our two sets of samples $\{\Vec{x}_i^{(p)}\}_{i=1}^{n_p}$ and $\{\Vec{x}_i^{(q)}\}_{i=1}^{n_q}$, this allows us to estimate the Hellinger distance as
\begin{align}
&\hat{\delta}_H^2(p \Vert q) = \frac{1}{n_p}\sum_i^{n_p} 
\bigg(\sqrt{T(\Vec{x}_i^{(p)})}  - \sqrt{1 - T(\Vec{x}_i^{(p)})} \bigg)^2 \notag \\ 
&+\frac{1}{n_q}\sum_i^{n_q}
\bigg(\sqrt{T(\Vec{x}_i^{(q)})}  - \sqrt{1 - T(\Vec{x}_i^{(q)})} \bigg)^2\;.
\label{eqn:hellinger_discrete}
\end{align}

The benefits of this approach are twofold. First, the resulting estimate is symmetric. Second, and maybe more importantly, the denominator of 
 $T(\cdot)$ alleviates the potential instabilities that low probabilities of samples under either $\hat{q}$ or $\hat{p}$ would have resulted in by making use of the formulation in Eq.~\ref{eq:est_hell_px}, or of its counterpart in terms of $q$. Note that such low probabilities are quite common when relying on KDE with high-dimensional data.
 
%\subsection{Empirical Jeffrey Divergence}

%By looking at Eq.~\ref{eqn:symm_KL_dist}, we can see that the J-divergence corresponds to a difference between two expectations, one under $p$ and one under $q$. One could then think of directly computing an empirical estimate of these expectations. Unfortunately, the terms in the expectations contain logarithm, which again can quickly become numerically unstable with low-probability samples.

In the case of the Jeffrey divergence, we make use of the same idea as for the Hellinger distance. We therefore express the J-divergence in terms of $T(\cdot)$, which, after some derivations, yields
\begin{align}
\hat{\delta}_J(p \Vert q) &= \frac{1}{n_p}\sum_i^{n_p}
(2T(\Vec{x}_i^{(p)})-1)\ln\frac{T(\Vec{x}_i^{(p)})}{1-T(\Vec{x}_i^{(p)})} \notag \\
&+ \frac{1}{n_q}\sum_i^{n_q}
(2T(\Vec{x}_i^{(q)})-1)\ln\frac{T(\Vec{x}_i^{(q)})}{1-T(\Vec{x}_i^{(q)})}
\;.
\end{align}
%{\bf ACTUALLY, I DON'T UNDERSTAND WHY THIS IS MORE STABLE. THE VALUE OF T(X) CAN STILL GO TO ZERO, AND THEN THE LOG WILL BE A PROBLEM.}

Our two empirical estimates give us practical ways to evaluate the distance between two image-sets represented by their PDFs. Given a training set of $m$ image-sets and a query image-set, matching can then simply be achieved by computing the distance of the query to all training image-sets, and choosing the nearest one as matching set.

%\begin{remark}
%We note that approximating the integral equations (\eg, Eqn.~\eqref{eqn:hellinger_cont}) to their corresponding discrete forms 
%(\eg, Eqn.\eqref{eqn:hellinger_discrete}) is based on the  law of large numbers (LLN). This is inline 
%with the practice of~\cite{Carter_2009} and empirically seems to be an adequate assumption for our problem. 
%\end{remark}

%\section{Kernel Properties}
\section{Kernels on the Statistical Manifold}
\label{sec:kernels}

In the previous section, we have introduced an approach to comparing the distributions of two image-sets using empirical estimates of the Hellinger distance or of the J-divergence. Such an approach, however, only allows us to make use of a nearest neighbor classifier. Kernel methods (\eg, Kernel Fisher discriminant analysis), however, provide much more powerful tools to perform classification, and thus image-set matching. Here, we therefore turn to the question of whether the divergences defined in Section~\ref{sec:background} can generate valid positive definite (\emph{pd}) kernels.
To answer this question, let us first define the notion of pd kernels.
\begin{definition}[Real-valued Positive Definite Kernels]
Let $\mathcal{X}$ be a nonempty set. A symmetric function $k: \mathcal{X} \times \mathcal{X} \to \mathbb{R}$ is a positive definite (\emph{\textbf{pd}})
kernel on $\mathcal{X}$ if and only if
\begin{equation}
	\sum_{i,j=1}^nc_ic_jk(x_i,x_j) \geq 0
	\label{eqn:pd_kernel_def}
\end{equation}
for any $n \in \mathbb{N}$, $x_i \in \mathcal{X}$ and $c_i \in \mathbb{R}$. 
\end{definition}

For the J-divergence, the kernel 
\begin{equation}
k_J(p,q) = \exp(-\sigma \delta_J(p \Vert q))
\label{eqn:Jeffrey_Gaussian_kernel}
\end{equation}
was introduced in~\cite{Moreno_NIPS_2003}, although
without a formal proof of positive definiteness. 
Later, in~\cite{Hein_2005}, a conditionally positive definite (\emph{cpd}) kernel based on a smooth version of the J-divergence was derived. 
To the best of our knowledge, a counter-example that disproves the  positive definiteness of $k_J(\cdot,\cdot)$ has never been exhibited in the literature. Therefore, in our experiments, we assumed that $k_J(\cdot,\cdot)$ is \emph{pd}.
{
Please note that the kernel in Eq.\eqref{eqn:Jeffrey_Gaussian_kernel} is not a Laplace kernel since 
$\delta_J$ denotes the  J-divergence not $\delta_J^2$. The reason behind our choice
of $\delta_J$ over $\delta_J^2$ is that the unlike the Hellinger distance, J-divergence is not a metric 
(this can be shown by a counter-example).
}

In the case of the Hellinger distance, a conditionally positive definite (\emph{cpd}) kernel was studied in~\cite{Hein_2005}. Here, in contrast, we derive valid \emph{pd} kernels. More precisely, we do not only introduce a single \emph{pd} kernel, but rather provide a recipe to generate diverse \emph{pd} kernels on the statistical manifold. Our derivations rely on the definition of negative definite kernels given below.
\begin{definition}[Real-valued Negative Definite Kernels]
Let $\mathcal{X}$ be a nonempty set. A symmetric function $\psi: \mathcal{X} \times \mathcal{X} \to \mathbb{R}$ is a negative definite (\emph{\textbf{nd}})
kernel on $\mathcal{X}$ if and only if $\sum_{i,j=1}^nc_ic_jk(x_i,x_j) \leq 0$ for any $n \in \mathbb{N}$, $x_i \in \mathcal{X}$ and $c_i \in \mathbb{R}$ with $\sum_{i=1}^n c_i = 0$.
\end{definition}
Note that, in contrast to positive definite kernels, an additional constraint of the form $\sum c_i = 0$ is required in the negative definite case. Given this definition, we now prove that the Hellinger distance is \emph{nd}. 
\begin{theorem}[Negative Definiteness of the Hellinger distance] \label{thm:hellinger_nd}
Let $\mathcal{M}$ denote the statistical manifold. The Hellinger distance $\delta_H^2:\mathcal{M} \times \mathcal{M} \to \mathbb{R}^+$ is negative definite.
\end{theorem}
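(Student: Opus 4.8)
The plan is to realize the square root of a PDF as a point in the Hilbert space $L^2(\mathcal{X})$, thereby identifying $\delta_H^2$ with a squared Hilbert-space distance, for which negative definiteness is a classical and purely algebraic fact. Concretely, I would define the feature map $\Phi : \mathcal{M} \to L^2(\mathcal{X})$ by $\Phi(p) = \sqrt{p}$ and show that $\delta_H^2$ is exactly the squared norm distance of the images under $\Phi$.

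First I would expand the integrand of Eq.~\ref{eqn:Hellinger_dist}. Since $\int p(\Vec{x})\,d\Vec{x} = \int q(\Vec{x})\,d\Vec{x} = 1$, expanding $(\sqrt{p}-\sqrt{q})^2$ gives
\[
\delta_H^2(p \Vert q) = 2 - 2\int \sqrt{p(\Vec{x})q(\Vec{x})}\,d\Vec{x} = 2 - 2B(p,q),
\]
which is precisely the relation recorded in the Remark of Section~\ref{sec:background}. In the language of $\Phi$, the normalization $\int p\,d\Vec{x} = 1$ says $\|\Phi(p)\|_{L^2}^2 = 1$, so every PDF is mapped to a point on the unit sphere of $L^2(\mathcal{X})$, and the expansion above reads
\[
\delta_H^2(p \Vert q) = \|\Phi(p) - \Phi(q)\|_{L^2}^2 .
\]

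Next I would invoke the standard characterization that any function of the form $\psi(x,y) = \|\Phi(x)-\Phi(y)\|^2$, with $\Phi$ mapping into a Hilbert space, is negative definite. To verify this in our setting, take any $n \in \mathbb{N}$, any $p_1,\dots,p_n \in \mathcal{M}$, and any $c_1,\dots,c_n \in \mathbb{R}$ with $\sum_i c_i = 0$. Expanding the squared distance yields
\[
\sum_{i,j} c_i c_j\, \delta_H^2(p_i \Vert p_j) = \Big(\sum_j c_j\Big)\sum_i c_i \|\Phi(p_i)\|^2 + \Big(\sum_i c_i\Big)\sum_j c_j \|\Phi(p_j)\|^2 - 2\Big\|\sum_i c_i \Phi(p_i)\Big\|^2 .
\]
The constraint $\sum_i c_i = 0$ annihilates the first two terms, leaving $-2\big\|\sum_i c_i \Phi(p_i)\big\|_{L^2}^2 \le 0$, which is exactly the negative-definiteness inequality required by the definition.

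The only genuine subtlety — and the step I would treat most carefully — is justifying the $L^2$ embedding itself: one must confirm that $\sqrt{p} \in L^2(\mathcal{X})$, which is immediate since $\|\sqrt{p}\|^2 = \int p = 1 < \infty$, and that the Bhattacharya integral $\int \sqrt{pq}\,d\Vec{x}$ is finite, which follows from Cauchy--Schwarz on $\Phi(p)$ and $\Phi(q)$. Beyond this, the argument is just the algebraic collapse of the squared-distance expansion under $\sum_i c_i = 0$; no structure of the statistical manifold is used other than the normalization $\int p = 1$, so the result holds for the full family $\mathcal{M}$.
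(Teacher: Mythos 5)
Your proposal is correct and is essentially the same argument as the paper's: both expand $(\sqrt{p_i}-\sqrt{p_j})^2$, use $\sum_i c_i = 0$ together with $\int p_i = 1$ to kill the diagonal terms, and are left with $-2\big\|\sum_i c_i \sqrt{p_i}\big\|_{L^2}^2 \le 0$. Your framing via the feature map $\Phi(p)=\sqrt{p}$ into the unit sphere of $L^2(\mathcal{X})$ is a clean conceptual wrapper, but the underlying computation is identical to the one in the paper.
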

\begin{proof}
\begin{align*}
&\sum_{i,j=1}^N c_i c_j \delta_H^2(p_i \Vert p_j) = \sum_{i,j=1}^N c_i c_j \int_x \Big(\sqrt{p_i(x)} - \sqrt{p_j(x)}\Big)^2 dx \\
&= \sum_{i=1}^N c_i \sum_{j=1}^N c_j \int_x p_j(x) dx + \sum_{j=1}^N c_j \sum_{i=1}^N c_i \int_x p_i(x) dx \\
&-2\sum_{i,j=1}^N c_i c_j \int_x \sqrt{p_i(x)p_j(x)} dx \\
&= -2\int_x  \sum_{i}^N c_i \sqrt{p_i(x)} \sum_{j}^N c_j \sqrt{p_j(x)}  dx \\
&= -2\int_x  \|\sum_{i}^N c_i \sqrt{p_i(x)}\|^2  dx \leq 0\;,
\end{align*}%
where the terms in the second line have disappeared due to the constraints $\sum_i c_i = 0$, resp. $\sum_j c_j = 0$, and to the fact that the integrals have value 1 for any $i$, resp. $j$.
\end{proof}

We then make use of the following theorem, which originated from the work of I. J. Schoenberg (1903-1990).
\begin{theorem} [Theorem 2.3 in Chapter 3 of~\cite{Berg:1984}] 
\label{thm:laplace_kernel}
Let $\mu$ be a probability measure on the half line $\mathbb{R}^{+}$ and  
$0 < \int_{0}^{\infty}t\mathrm{d}\mu(t) < \infty$. Let $\mathcal{L}_{\mu}$ be the Laplace transform of $\mu$, \ie, 
$\mathcal{L}_\mu(s) = \int_{0}^{\infty}e^{-ts}\mathrm{d}\mu(t),\; s \in \mathbb{C}$. Then, $\mathcal{L}_\mu(\beta f)$
is positive definite for all $\beta > 0$  if and only if $f:\mathcal{X} \times \mathcal{X} \to \mathbb{R}^{+}$  is negative definite.
\end{theorem}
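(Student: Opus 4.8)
The statement is an equivalence, so the plan is to prove the two implications separately, with rather different tools for each. The forward direction, from $f$ negative definite to positive definiteness of $\mathcal{L}_\mu(\beta f)$, will follow from the classical Schoenberg correspondence combined with the fact that a mixture of positive definite kernels against a positive measure remains positive definite. The converse, recovering negative definiteness of $f$ from positive definiteness of the Laplace transforms for \emph{all} $\beta > 0$, is the delicate part and will be extracted from the first-order behaviour of $\mathcal{L}_\mu(\beta f)$ as $\beta \to 0^+$; this is precisely where the moment hypothesis $0 < \int_0^\infty t\,\mathrm{d}\mu(t) < \infty$ is used.

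For the forward direction I would first recall Schoenberg's theorem: if $f$ is negative definite, then $\exp(-s f)$ is a positive definite kernel for every $s \geq 0$. Fixing $\beta > 0$ and setting $s = t\beta$, for any points $x_1,\dots,x_n \in \mathcal{X}$ and scalars $c_1,\dots,c_n \in \mathbb{R}$ I would interchange the finite sum with the integral to obtain
\begin{equation*}
\sum_{i,j=1}^n c_i c_j\, \mathcal{L}_\mu(\beta f)(x_i,x_j) = \int_0^\infty \Big( \sum_{i,j=1}^n c_i c_j\, e^{-t\beta f(x_i,x_j)} \Big)\, \mathrm{d}\mu(t).
\end{equation*}
For each fixed $t \geq 0$ the inner sum is nonnegative because $e^{-t\beta f}$ is positive definite, and integrating a nonnegative integrand against the positive measure $\mu$ preserves nonnegativity. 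Hence $\mathcal{L}_\mu(\beta f)$ is positive definite, with no constraint on the $c_i$.

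For the converse I would analyse $\mathcal{L}_\mu(\beta f)$ near $\beta = 0$. Since $\mu$ is a probability measure, $\mathcal{L}_\mu(0) = 1$, and the key claim is that
\begin{equation*}
\lim_{\beta \to 0^+} \frac{\mathcal{L}_\mu(\beta f)(x,y) - 1}{\beta} = -m_1\, f(x,y), \qquad m_1 := \int_0^\infty t\, \mathrm{d}\mu(t).
\end{equation*}
This follows by dominated convergence from the elementary bound $|e^{-u}-1| \leq u$ for $u \geq 0$: the difference quotients $(e^{-t\beta f}-1)/\beta$ are bounded in modulus by $t\,f(x,y)$, which is $\mu$-integrable exactly because $m_1 < \infty$, and they converge pointwise in $t$ to $-t\,f(x,y)$. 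Now I take coefficients with $\sum_i c_i = 0$, so that the constant contribution $\sum_{i,j} c_i c_j = (\sum_i c_i)^2 = 0$ drops out, giving
\begin{equation*}
\frac{1}{\beta}\sum_{i,j=1}^n c_i c_j\, \mathcal{L}_\mu(\beta f)(x_i,x_j) = \sum_{i,j=1}^n c_i c_j\, \frac{\mathcal{L}_\mu(\beta f)(x_i,x_j) - 1}{\beta} \geq 0,
\end{equation*}
where the inequality holds for every $\beta > 0$ by the assumed positive definiteness. Letting $\beta \to 0^+$ and invoking the limit above yields $-m_1 \sum_{i,j} c_i c_j\, f(x_i,x_j) \geq 0$, and since $m_1 > 0$ this is precisely $\sum_{i,j} c_i c_j\, f(x_i,x_j) \leq 0$, i.e.\ $f$ is negative definite.

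The main obstacle is the converse, and inside it the justification of the limit/integral interchange. One needs $m_1$ to be both finite, so that dominated convergence applies and the linear term in $\beta$ exists, and strictly positive, so that the resulting inequality can be divided by $m_1$ without collapsing. Both requirements are furnished by $0 < \int_0^\infty t\,\mathrm{d}\mu(t) < \infty$, which is therefore essential rather than cosmetic: were the first moment to vanish, the leading-order term in the expansion would disappear and positive definiteness of the mixtures would carry no first-order information about $f$.
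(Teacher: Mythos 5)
Your proof is correct, but note that the paper does not actually prove this statement --- it is imported verbatim, with citation, from Berg--Christensen--Ressel, and your argument is essentially the one given there: the forward direction by integrating Schoenberg's theorem ($f$ nd $\Leftrightarrow$ $e^{-sf}$ pd for all $s>0$) against the positive measure $\mu$, and the converse by observing that $\frac{1}{\beta}\bigl(1-\mathcal{L}_\mu(\beta f)\bigr)$ is negative definite for each $\beta>0$ and converges pointwise (via dominated convergence, using $m_1<\infty$) to $m_1 f$, whence $f$ is nd since $m_1>0$. The only detail you leave implicit is that symmetry of $f$ follows automatically from your limit identity $-m_1 f(x,y)=\lim_{\beta\to 0^+}\beta^{-1}\bigl(\mathcal{L}_\mu(\beta f)(x,y)-1\bigr)$, whose right-hand side is symmetric; this is harmless.
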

Theorem~\ref{thm:laplace_kernel} provides a general recipe to create \emph{pd} kernels. In particular, here, we focus on the Gaussian and the Laplace kernels, which have proven powerful in Euclidean space.
The Gaussian kernel can be obtained by choosing $\mu(t) = \delta(t-1)$ in Theorem~\ref{thm:laplace_kernel}, where $\delta$ denotes the Dirac function.  On the statistical manifold, this kernel can then be written as
\begin{equation}
k_H(p,q) = \exp(-\sigma \delta_H^2(p,q)),~~~\sigma > 0\;.
\label{eqn:Hellinger_Gaussian_kernel}
\end{equation}

To derive the Laplace kernel on the statistical manifold, we must further rely on the following theorem.
\begin{theorem}[Corollary 2.10 in Chapter 3 of~\cite{Berg:1984}] \label{thm:power_nd}
If $\psi:\mathcal{X} \times \mathcal{X} \to \mathbb{R}$ is negative definite and satisfies $\psi(\Vec{x},\Vec{x}) \geqq 0$
then so is $\psi^\alpha$ for $0 < \alpha < 1$.
\end{theorem}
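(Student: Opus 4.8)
The plan is to write $\psi^\alpha$ as a superposition of elementary negative definite kernels, each of which is handled by Theorem~\ref{thm:laplace_kernel}, and then to argue that negative definiteness survives the superposition. Before doing so, I would first record that the two hypotheses together force $\psi \ge 0$ everywhere, so that $\psi^\alpha$ is a well-defined real-valued symmetric kernel. Indeed, testing the negative-definiteness inequality with $n=2$, points $\Vec{x},\Vec{y}$, and coefficients $c_1=1,\,c_2=-1$ (which satisfy $\sum_i c_i = 0$) gives $\psi(\Vec{x},\Vec{x}) + \psi(\Vec{y},\Vec{y}) - 2\psi(\Vec{x},\Vec{y}) \le 0$, hence $2\psi(\Vec{x},\Vec{y}) \ge \psi(\Vec{x},\Vec{x}) + \psi(\Vec{y},\Vec{y}) \ge 0$ by the assumption on the diagonal. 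Thus $\psi$ is nonnegative and $\psi^\alpha$ makes sense.

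The analytic crux is the integral representation of the power function, valid for $t \ge 0$ and $0 < \alpha < 1$,
\[
t^\alpha = \frac{\alpha}{\Gamma(1-\alpha)} \int_0^\infty \bigl(1 - e^{-\lambda t}\bigr)\,\lambda^{-1-\alpha}\,d\lambda ,
\]
whose integrand is integrable at both ends precisely because $\alpha\in(0,1)$ (near $\lambda=0$ one uses $1-e^{-\lambda t}\sim \lambda t$, and near $\lambda=\infty$ the factor is bounded by $1$). I would then show that, for each fixed $\lambda>0$, the kernel $(\Vec{x},\Vec{y})\mapsto 1 - e^{-\lambda\psi(\Vec{x},\Vec{y})}$ is negative definite. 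Since $\psi$ is \emph{nd}, Theorem~\ref{thm:laplace_kernel} applied with $\mu=\delta(t-1)$ (so that $\mathcal{L}_\mu(s)=e^{-s}$) guarantees that $e^{-\lambda\psi}$ is \emph{pd} for every $\lambda>0$. Consequently, for any coefficients with $\sum_i c_i=0$,
\[
\sum_{i,j} c_i c_j \bigl(1 - e^{-\lambda\psi(x_i,x_j)}\bigr) = \Bigl(\sum_i c_i\Bigr)^2 - \sum_{i,j} c_i c_j\, e^{-\lambda\psi(x_i,x_j)} = -\sum_{i,j} c_i c_j\, e^{-\lambda\psi(x_i,x_j)} \le 0 ,
\]
the first term vanishing by the constraint and the second being nonnegative by positive definiteness.

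Finally, I would integrate. Inserting the representation and interchanging the finite double sum with the integral (a trivial interchange, since the sum is finite), for any coefficients with $\sum_i c_i=0$ one gets
\[
\sum_{i,j} c_i c_j\, \psi^\alpha(x_i,x_j) = \frac{\alpha}{\Gamma(1-\alpha)} \int_0^\infty \Bigl(\sum_{i,j} c_i c_j\bigl(1 - e^{-\lambda\psi(x_i,x_j)}\bigr)\Bigr)\lambda^{-1-\alpha}\,d\lambda \le 0 ,
\]
because the bracketed integrand is $\le 0$ for every $\lambda$ by the previous step, while the weight $\tfrac{\alpha}{\Gamma(1-\alpha)}\lambda^{-1-\alpha}$ is nonnegative. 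This establishes that $\psi^\alpha$ is negative definite. The main obstacle is not any single inequality but the analytic scaffolding: producing the integral representation with its correct constant and verifying integrability, and then justifying cleanly that negative definiteness is preserved when averaging the elementary kernels $1-e^{-\lambda\psi}$ against the nonnegative measure $\lambda^{-1-\alpha}\,d\lambda$. Once that preservation-under-integration principle is in place, the result follows immediately from Theorem~\ref{thm:laplace_kernel}.
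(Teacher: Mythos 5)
The paper does not prove this statement itself---it simply cites it as Corollary~2.10 in Chapter~3 of~\cite{Berg:1984}---and your argument is precisely the standard proof given there: deduce $\psi\ge 0$ from the two hypotheses, invoke Schoenberg's theorem (the paper's Theorem~\ref{thm:laplace_kernel} with $\mu=\delta(t-1)$) to get that each $1-e^{-\lambda\psi}$ is negative definite, and average against the L\'evy-type measure $\tfrac{\alpha}{\Gamma(1-\alpha)}\lambda^{-1-\alpha}\,d\lambda$ via the subordination formula for $t^\alpha$. The proof is correct and complete, including the normalizing constant and the integrability check at both endpoints.
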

As a consequence, by choosing $\psi = \delta_H^2$ and $\alpha = 1/2$ in Theorem~\ref{thm:power_nd}, we have that $\delta_H(\cdot,\cdot)$ is \emph{nd}. Then, applying Theorem~\ref{thm:laplace_kernel} with $\delta_H(\cdot,\cdot)$ and $\mu(t) = \delta(t-1)$ lets us derive the Laplace kernel on the statistical manifold
\begin{align}	
	k_L(p,q) = \exp(-\sigma \delta_H(p,q)),~~~\sigma > 0\;.
	\label{eqn:Hellinger_Laplace_kernel}
\end{align}

\begin{remark}
The Hellinger distance can be thought of as the chordal distance between points on an infinite-dimensional unit hyper-sphere. More specifically, 
the square root function is a diffeomorphism between the statistical manifold and the unit hyper-sphere. In~\cite{Srivastava_CVPR_2007}, this was exploited to estimate the distance between discretized PDFs as the geodesic distance on the corresponding (finite-dimensional) hyper-sphere. Such a distance, however, cannot induce a valid positive definite Gaussian kernel, since the Gaussian kernel produced by the geodesic distance on a Riemannian manifold is not positive definite unless the manifold is flat~\cite{Feragen_2014}. In contrast, as shown above, our divergences yield valid positive definite kernels, which allow us to exploit more sophisticated classification methods.
\end{remark}

\begin{remark}
Note that the discussion above proves the positive definiteness of kernels defined with the exact Hellinger distance, and does not necessarily extend to its empirical estimate. However, since the strong law of large numbers guarantees convergence of our empirical estimate to the true distance, given sufficiently many samples, the resulting empirical kernels will also be \emph{pd}.
\end{remark}

In our experiments, we used the kernels derived above to perform kernel discriminant analysis. Image-set matching was then achieved by using the Euclidean distance in the resulting low-dimensional latent space.

\section{$f$-Divergences for Dimensionality Reduction}
\label{sec:dim_red}

The methods described in Sections~\ref{sec:practical_considerations} and~\ref{sec:kernels} directly compare the distributions of the original features of each image-set. As mentioned earlier, with high-dimensional features that are common in computer vision, KDE may produce very sparse PDFs (\ie, PDFs that are strongly peaked around the samples and zero everywhere else), which may be less reliable to compare. To address this issue, here, we propose to learn a mapping of the features to a low-dimensional space, such that the $f$-divergences in the resulting space reflect some interesting properties of the data. 

{
As will be shown shortly, we will formulate the dimensionality reduction as a problem on the Grassmann manifolds. 
The use of Grassmann and Stiefel 
manifolds for dimensionality reduction is an emerging topic in machine learning. Two notable examples are robust PCA using Grassmannian 
by Hauberg \etal~\cite{Hauberg_CVPR_2014} and linear dimensionality reduction using Stiefel manifolds 
by Cunningham and Ghahramani~\cite{Cunningham_JMLR_2015}.}

Focusing on the supervised scenario, we search for a latent space where two image-sets are close to each other (according to the $f$-divergence) if they belong to the same class and far apart if they don't.
That is, given a set of image-sets $\mathcal{X} = \left\{\Mat{X}_1,\cdots, \Mat{X}_m \right\}$, where each image-set 
$\Mat{X}_i = \{\Vec{x}^{(i)}_{l}\}_{l=1}^{n_i},~ \Vec{x}^{(i)}_l \in \mathbb{R}^D$, 
our goal is to find a transformation $\Mat{W} \in \mathbb{R}^{D \times d}$ such that the $f$-divergences between the mapped image-sets $\left\{\{\Mat{W}^T\Vec{x}^{(i)}_l\}_{l=1}^{n_i}\right\}_{i=1}^m$ encode some interesting structure of the data. Here, we represent this structure via an affinity function $a(\Mat{X}_i,\Mat{X}_j)$ that encodes pairwise relationships between the image-sets. This affinity function will be described in Section~\ref{sec:affinity}.

Since we aim for the $f$-divergences to reflect this affinity measure, we can write a cost function of the form
\begin{equation}
	\mathcal{L}(\Mat{W})  =  \sum_{i,j} a({\Mat{X}_i,\Mat{X}_j}) \cdot \delta \left(\Mat{W}^T\Mat{X}_i,\Mat{W}^T\Mat{X}_j\right)\;,
	\label{eqn:DR_cost_fun}
\end{equation}
where $\delta: \mathcal{M} \times \mathcal{M} \to \mathbb{R}^+$ is either $\delta_H^2(\cdot,\cdot)$ or $\delta_{J}(\cdot,\cdot)$, and where we sum over all pairs of image-sets.
To avoid possible degeneracies when minimizing this cost function w.r.t. $\Mat{W}$, and following common practice in dimensionality reduction, we enforce the solution to be orthogonal, \ie, $\Mat{W}^T\Mat{W} = \mathbf{I}_{d}$. This allows us to write dimensionality reduction as the optimization problem
\begin{eqnarray}
\Mat{W}^* = &\underset{\Mat{W}}{\operatorname{argmin}}&\hspace{-0.2cm} \mathcal{L}(\Mat{W}) \label{eqn:DR_opt_prob} \\
& {\rm s.t.} &\hspace{-0.5cm}\Mat{W}^T \Mat{W} = \Mat{I}_d\;.\notag
    \end{eqnarray}
Below, we show that, for both divergences of interest, \ie, the Hellinger distance and the J-divergence, \eqref{eqn:DR_opt_prob} is a minimization problem on a Grassmann manifold.

The Grassmann manifold $\GRASS{d}{D}$ is the space of $d$-dimensional subspaces in $\mathbb{R}^D$ and corresponds to
as a quotient space of the Stiefel manifold (\ie, the space of  $d$-dimensional frames in $\mathbb{R}^D$, or in other words orthogonal $D\times d$ matrices)~\cite{Edelman:1998}.
More specifically, the points on the Stiefel manifold that form a basis of the same subspace are identified with a single point on the Grassmann manifold.
As such, a minimization problem with orthogonality constraint $\Mat{W}^T\Mat{W} = \mathbf{I}_{d}$ is a problem on the Grassmannian
\emph{iff} the cost of the problem is invariant to the choice of basis of the subspace spanned by $\Mat{W}$. Mathematically, 
$\min_\Mat{W} \mathcal{L}(\Mat{W})$ with $\Mat{W}^T\Mat{W} = \mathbf{I}_{d}$ is a problem on the Grassmannian \emph{iff} 
$\mathcal{L}(\Mat{W}\Mat{R})= \mathcal{L}(\Mat{W}),~\forall \Mat{R} \in \mathcal{O}(d)$, where $\mathcal{O}(d)$ denotes the group of $d \times d$ orthogonal matrices. 
Since transformations in $\mathcal{O}_d$ are bijections, the invariance property of Theorem~\ref{thm:invariance_f_div} directly shows that the cost function of~\eqref{eqn:DR_opt_prob} is invariant to the choice of basis. In other words,~\eqref{eqn:DR_opt_prob} can be solved as an unconstrained minimization problem on $\GRASS{d}{D}$.

In practice, to solve~\eqref{eqn:DR_opt_prob} on $\GRASS{d}{D}$, we make use of Newton-type methods (\eg, the conjugate gradient method). These methods inherently require the gradient of $\mathcal{L}(\Mat{W})$.
On $\GRASS{d}{D}$, the gradient\footnote{%
On an abstract Riemannian manifold $\mathcal{M}$, the gradient of a smooth real function $f$ at a point
$x \in \mathcal{M}$, denoted by $\nabla f(x)$, is the element of $T_x\mathcal{M}$ satisfying
$\langle \nabla f(x), \zeta \rangle_x = Df(x)[\zeta],~\forall \zeta \in T_x\mathcal{M}$, where $Df(x)[\zeta]$ denotes
the directional derivative of $f$ at $x$ along direction $\zeta$.}
can be expressed as
\begin{equation}
\nabla_\Mat{W}\mathcal{L}(\Mat{W}) = (\mathbf{I}_D - \Mat{W}\Mat{W}^T)\operatorname{grad}\mathcal{L}(\Mat{W}),
\end{equation}
where $\operatorname{grad}\mathcal{L}(\Mat{W})$ is the $D \times d$ matrix of partial derivatives of $\mathcal{L}(\Mat{W})$ 
with respect to the elements of $\Mat{W}$, \ie,
\begin{equation*}
[\operatorname{grad}\mathcal{L}(\Mat{W})]_{i,j} = \frac{\partial\mathcal{L}(\Mat{W})}{\partial\Mat{W}_{i,j}}.
\end{equation*}
The detailed derivations of $\operatorname{grad}\mathcal{L}(\Mat{W})$ for our two $f$-divergences are provided in the appendix. 

\begin{remark}
A weaker form of constraints on $\Mat{W}$ could be achieved by exploiting the geometry of the oblique manifold~\cite{Absil_2006}. The oblique manifold $\mathcal{OB}(d,D)$ is a submanifold of $\mathbb{R}^{D \times d}$ 
whose points are rank $d$ matrices $\Mat{W}$ subject to
\begin{equation*}
	%\Mat{W} \in \mathcal{OB}(d,D) \Rightarrow 
	\mathrm{ddiag}\big(\Mat{W}^T\Mat{W} \big) = \mathbf{I}_d\;,
\end{equation*}
where $\mathrm{ddiag}(\Mat{A})$ denotes the diagonal matrix whose elements are those of the diagonal of $\Mat{A}$.
Similarly to the Stiefel and Grassmann manifolds, a point on the oblique manifold is represented by a tall $D \times d$ matrix with unit norm
columns. By contrast, however, these columns are not necessarily orthogonal. We found empirically that relying on these weaker constraints does not yield as good accuracy for image-set matching as our orthogonality constraints. This appears to be due to the fact that, when two columns of $\Mat{W}$ become very similar, but still satisfy the rank constraint, the cost decreases, but the resulting latent space is less informative. This illustrates the importance of orthogonal columns, which favor diversity in the latent space.
\end{remark}

\subsection{Affinity Measure}
\label{sec:affinity}

As mentioned above, we propose to exploit supervised data to define the affinity measure used in the cost function of Eq.~\ref{eqn:DR_cost_fun}. Note that unsupervised approaches are also possible, for instance to find a mapping that preserves the closeness of pairs of image-sets. 

In our supervised scenario, let $y_i$ denote the class label of image-set $\Mat{X}_i$, with $1 \leq y_i \leq C$. We define the affinity between two sets $\Mat{X}_i$ and $\Mat{X}_j$ with labels $y_i$ and $y_j$, respectively, as
\begin{equation}
	a(\Mat{X}_i,\Mat{X}_j) = g_w(\Mat{X}_i,\Mat{X}_j) - g_b(\Mat{X}_i,\Mat{X}_j) \;,
	\label{eqn:affinity_Gw_Gb_combined}
\end{equation}
where $g_w$ and $g_b$ encode a notion of within-class similarity and between-class similarity, respectively. These similarities can be expressed as
\begin{equation}
   	g_w(\Mat{X}_i,\Mat{X}_j) =
   	\left\{
 		\begin{matrix}
       	1, & \mbox{if} \; \Mat{X}_i \in N_w(\Mat{X}_j ) \; \mbox{~or~} \; \Mat{X}_j \in N_w(\Mat{X}_i ) \\
   		0, & \mbox{otherwise}
   		\end{matrix}
   	\right.
\nonumber %\label{eqn:Gw}
\end{equation}%
\noindent
\begin{equation}
   	g_b(\Mat{X}_i,\Mat{X}_j) =
   	\left\{
   	\begin{matrix}
       	1, & \mbox{if} \; \Mat{X}_i \in N_b(\Mat{X}_j ) \; \mbox{~or~} \; \Mat{X}_j \in N_b(\Mat{X}_i ) \\
       	0, & \mbox{otherwise}
   	\end{matrix}
   	\right.
   	\nonumber %\label{eqn:Gb}
\end{equation}%
\noindent
where $N_w(\Mat{X}_i)$ is the set of $\nu_w$ nearest neighbors of $\Mat{X}_i$ (according to the $f$-divergence) that share the same label as $y_i$, 
and $N_b(\Mat{X}_i)$ contains the $\nu_b$ nearest neighbors of $\Mat{X}_i$ having different labels. 
In our experiments, we defined $\nu_w$ as the minimum number of points in a class and found $\nu_b \leq \nu_w$ by cross-validation.

{
Before moving to the next section, we would like to take a detour and discuss a couple of observations/insights on the 
dimensionality reduction algorithm. Examples below are all taken from the face recognition experiment on YouTube Celebrity (YTC) 
dataset~\cite{YT_Celebrity} (the very first experiment in \textsection~\ref{sec:experiments}).
Fig.~\ref{fig:cost_function_example} shows the behavior of the cost function depicted in Eq.\eqref{eqn:DR_cost_fun} for a few
iterations of the conjugate gradient method on Grassmannian. In practice, 
we found that the algorithm would converge in most cases in less than 25 iterations.
On a related note, each iteration of the conjugate gradient on Grassmannian for the aforementioned case took near 40 seconds 
on a quad core desktop machine. 
}

\def \CFSIZE {0.7}
\begin{figure}[!tb]
  \centering
  \includegraphics[width=\CFSIZE \columnwidth,keepaspectratio]{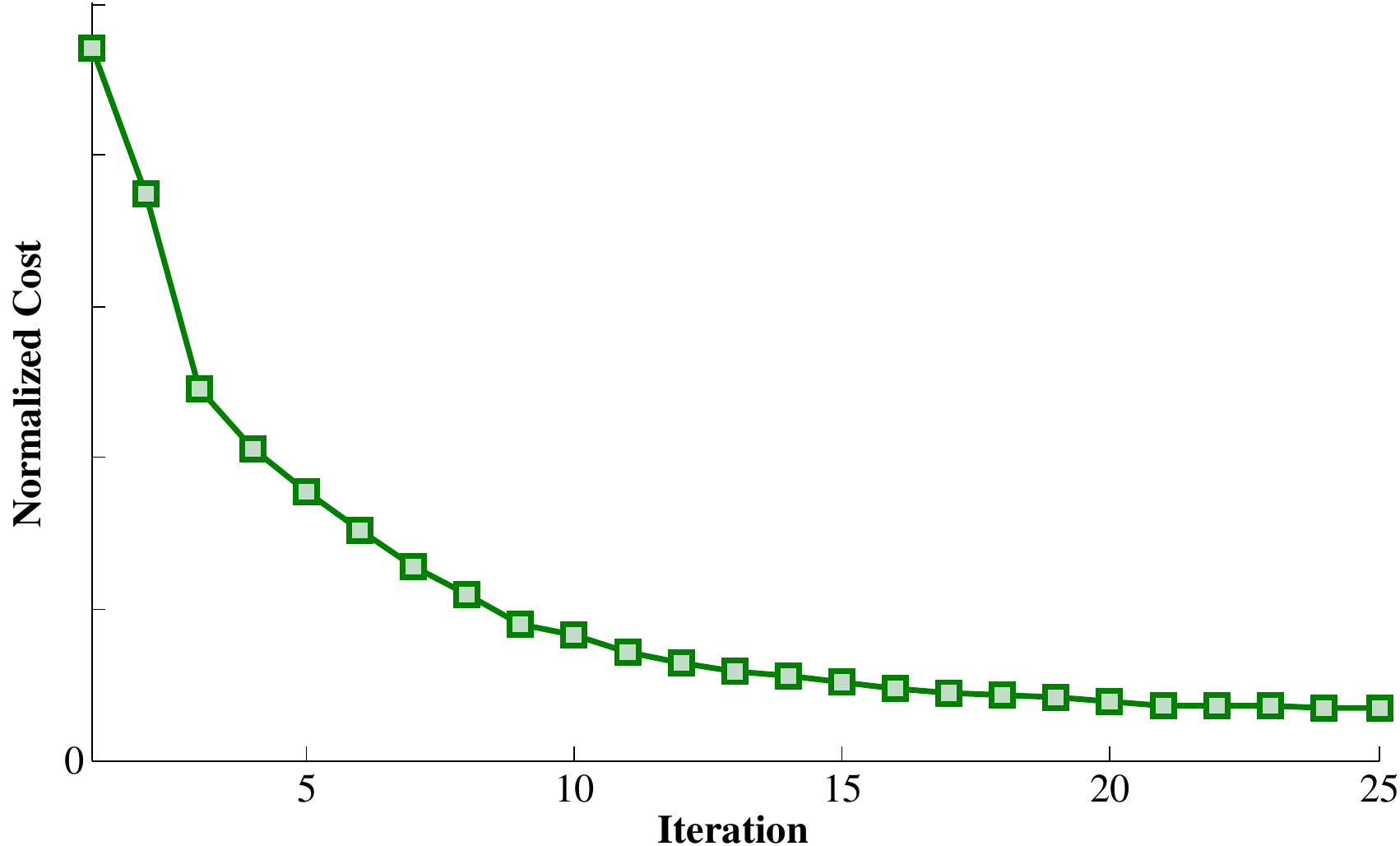}
  \caption{  \small The behavior of the cost function depicted in Eq.\eqref{eqn:DR_cost_fun} for a few
  iterations of the conjugate gradient method on Grassmannian.}
  \label{fig:cost_function_example}
\end{figure}

{
Fig.~\ref{fig:affinity_example} shows the affinity matrix before and after training for eight representative classes of YTC dataset.
Bright and dark regions represent high and low similarities, respectively. In the ideal case, the affinity 
matrix should compose of eight $3 \times 3$ sub-blocks sitting on the diagonal of the matrix. 
Fig.~\ref{fig:affinity_example} suggests that after training, each class has become more compact and hence 
more discriminative. On top of this, computing distances  in low-dimensional spaces is much faster than high-dimensional ones. For example in the case of YTC, computing 10,000 distances in the high-dimensional space took 100 seconds while after dimensionality reduction 
this time is reduced to 25 seconds.
}

%\section{Empirical Evaluation}
\section{Experimental Evaluation}
\label{sec:experiments}

We now evaluate the algorithms introduced in the previous sections on diverse standard image-set matching problems. In particular, for our kernel-based approach, we make use of the kernel Fisher Discriminant Analysis (kFDA) algorithm. kFDA is a kernel-based approach to learning a discriminative latent space. Classification in the resulting latent space is then performed with a Nearest Neighbor (NN) classifier based on the Euclidean distance. In all our experiments, the dimensionality of the latent space, for both kFDA and our dimensionality reduction scheme, was determined by cross-validation. 
{
The kernel bandwidth, \ie, $\sigma$ in Eq.\eqref{eqn:Jeffrey_Gaussian_kernel}, Eq.\eqref{eqn:Hellinger_Gaussian_kernel} and 
Eq.\eqref{eqn:Hellinger_Laplace_kernel} was chosen from $\{0.001,0.005,0.01,0.05,0.1,0.5,1\}$.}
In the remainder of this section, we refer to our different algorithms as
\begin{itemize}[leftmargin=*]
	\renewcommand{\labelitemi}{\scriptsize$$}
	\item \textbf{NN-H:} NN classifier based on the Hellinger distance.
	\item \textbf{NN-J:} NN classifier based on the J-divergence.
	\item \textbf{kFDA-HG:} kFDA with the Hellinger distance (Eq.~\eqref{eqn:Hellinger_Gaussian_kernel}).
	\item \textbf{kFDA-HL:} kFDA with the Hellinger distance (Eq.~\eqref{eqn:Hellinger_Laplace_kernel}).
	\item \textbf{kFDA-J:} kFDA with the J-divergence (Eq.~\eqref{eqn:Jeffrey_Gaussian_kernel}).
	\item \textbf{NN-H-DR:} NN classifier based on the Hellinger distance after dimensionality reduction.
	\item \textbf{NN-J-DR:} NN classifier based on the J-divergence after dimensionality reduction.	
\end{itemize}
Since, as mentioned before, our approach was motivated by techniques that exploit geometrical structures, such as SPD or Grassmann manifolds, which have proven effective for image-set matching, we compare our results against two such techniques. In particular, we make use of Grassmann Discriminant Analysis ({\bf GDA})~\cite{Hamm_ICML_2008} and of Covariance Discriminative Learning ({\bf CDL})~\cite{Wang_CVPR_2012} as baseline algorithms, both which, as us, employ kFDA to match image-sets. For CDL, the kernel function $k_\mathcal{S}:\SPD{n} \times \SPD{n} \to \mathbb{R}$ is given by $k_\mathcal{S}(\Mat{A},\Mat{B}) = \tr(\log(\Mat{A})^T\log(\Mat{B}))$, where $\log$ is the principal matrix logarithm. For GDA, the kernel function $k_\mathcal{G}:\GRASS{p}{n} \times \GRASS{p}{n} \to \mathbb{R}$ is the projection kernel defined as $k_\mathcal{G}(\Mat{A},\Mat{B}) = \|\Mat{A}^T\Mat{B}\|_F^2$. In all our experiments, we used the same image features for our approach and for GDA and CDL. Finally, for each dataset, we also report the best result found in the literature, and refer to this result as {\bf State-of-the-art}.

\begin{figure}[!tb]
  \centering
  \includegraphics[width= 0.35 \columnwidth,keepaspectratio]{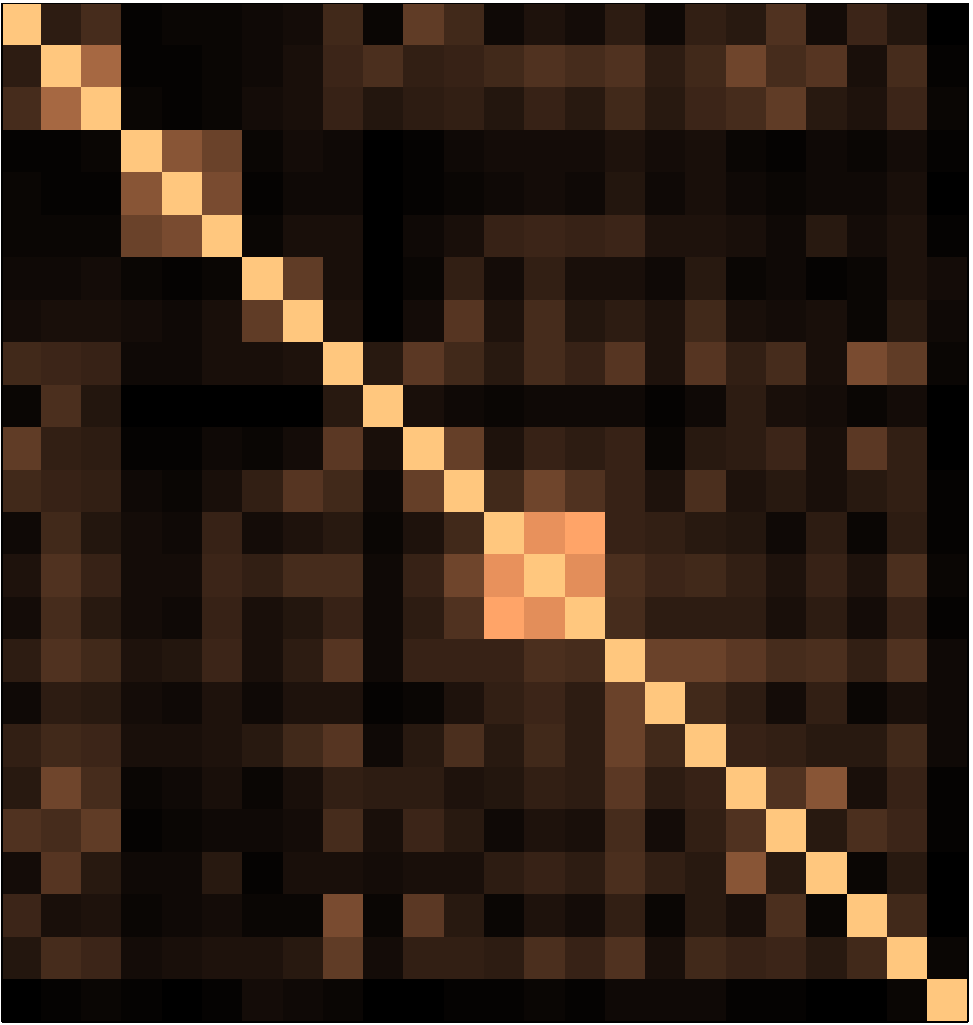}
  \hfill
  \includegraphics[width= 0.35 \columnwidth,keepaspectratio]{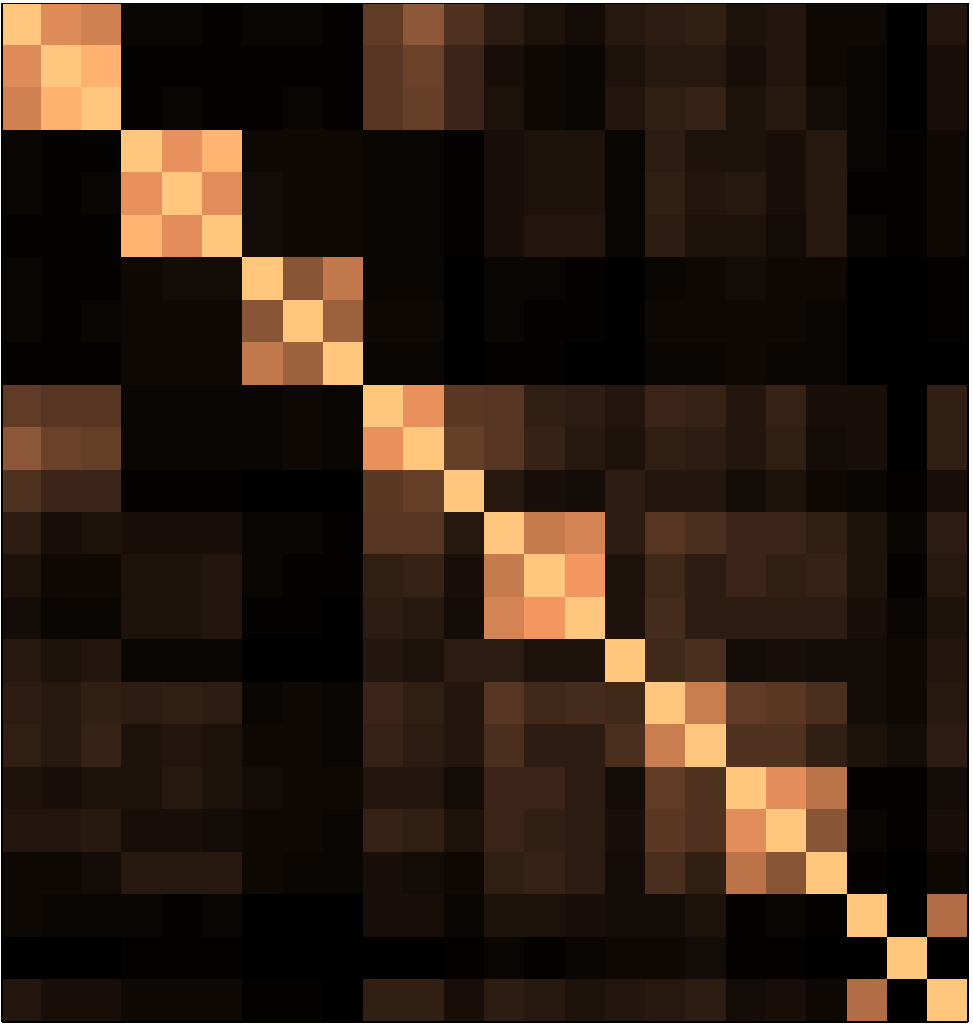}
  \caption{  \small  The affinity matrix before and after training for eight representative classes of YTC dataset.}
  \label{fig:affinity_example}
\end{figure}

\subsection{Video-Based Face Recognition}

\def \YTSIZE {0.225}
\begin{figure}[!tb] 
  \centering 
  \includegraphics[width=\YTSIZE \columnwidth,keepaspectratio]{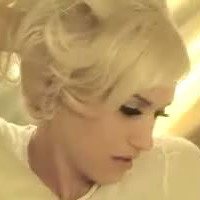}
  \hfill
  \includegraphics[width=\YTSIZE \columnwidth,keepaspectratio]{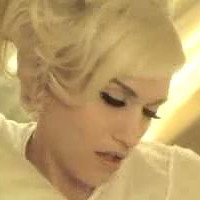}
  \hfill
  \includegraphics[width=\YTSIZE \columnwidth,keepaspectratio]{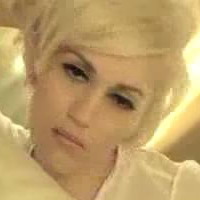}
  \hfill
  \includegraphics[width=\YTSIZE \columnwidth,keepaspectratio]{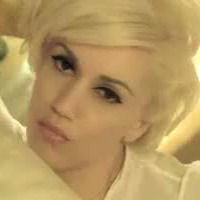}
  \caption
    {
    Samples from YouTube celebrity.% (in our experiments, we used grayscale images).
    }
  \label{fig:YT_Celebrity_example}
  \vspace{-0.15cm}
\end{figure}

For the task of image-set-based face recognition, we used the YTC~\cite{YT_Celebrity}
and COX~\cite{COX_Database} datasets (see Fig.~\ref{fig:YT_Celebrity_example} for samples from YTC). 
The YTC dataset contains 1910 video clips of 47 subjects. The COX dataset includes 1,000 subjects, each captured by three cameras (\ie, 3,000 videos in total). For the YTC dataset, we used face regions extracted from the videos, resized to $64 \times 64$, and described each region with a histogram of Local Binary Patterns (LBP)~\cite{LBP_PAMI_2002}. For the COX dataset, following~\cite{Huang_PR_2015}, we used histograms of equalized intensity values as features\footnote{We found these features to be superior to LBP features on COX.}.

For the YTC dataset, following the standard practice~\cite{Lu_ICCV_2013}, 3 videos from each person
were randomly chosen as training/gallery data, and the query set contained 6 randomly chosen videos from each subject. 
The process of random selection was repeated 5 times.
For the COX dataset, we followed the test protocol of~\cite{Huang_PR_2015}, and used videos of 100 randomly chosen subjects as training data, used only for dimensionality reduction. 
%\NOTE{What is the difference between gallery and training data? I don't really understand the protocol for COX.}
Then, 100 subjects were randomly chosen to form the gallery/probe sets for 6 different experiments. For each experiment, the camera number determines the gallery and probe sets. For example, COX12 refers to the test scenario where videos captured by Cam1 and Cam2 are used as gallery and probe set, respectively. The random selection of training and gallery/probe sets was repeated 10 times.

Table~\ref{tab:crr_face} shows the average accuracies of all methods on the YTC and COX datasets. For these datasets, the State-of-the-art baselines correspond to the metric learning approach of~\cite{Lu_ICCV_2013} and the hybrid solution of~\cite{Huang_PR_2015}, respectively. As far as geometrical methods are concerned, the results evidence that making use of the statistical manifold yields superior results compared to the Grassmann and SPD manifolds. This is even true for the direct NN classifiers based on our divergences, which are further improved by dimensionality reduction.
%For example, kFDA-gHl is more than 20 percentage points better than GDA on COX-12.
This, we believe, demonstrates the benefits of relying on more accurate PDF representations of each image-set (\ie, KDE in our case versus single Gaussians for CDL and GDA). Furthermore, on both datasets, our algorithms either match or outperform the state-of-the-art. 
{The exceptions are COX12 and COX32}, which could be attributed to the more sophisticated classification scheme used in~\cite{Huang_PR_2015}. Note that, as acknowledged in~\cite{Huang_PR_2015}, the hybrid method does not scale up well to large datasets. 

\setlength{\tabcolsep}{0.25em} % for the horizontal padding
\begin{table*}[!tb]	
	\scriptsize
	\centering
	\caption{\small	Average recognition rates on the YTC and COX datasets.    }
   \label{tab:crr_face}
  \begin{tabular}{|l| c|c|c|c|c|c|c|}
    %\toprule
    \hline
    {\bf Method}   			&{\bf YTC} 	&{\bf COX12}		&{\bf COX13} 	&{\bf COX23}	&{\bf COX21}	&{\bf COX31}	&{\bf COX32}\\
    %\toprule
    \hline
    \textbf{GDA~\cite{Hamm_ICML_2008}}			& $66.2 \pm 9.7$		& $68.8$		& $77.7$		& $71.6$
    																	& $66.0$		& $76.1$		& $74.8$\\				
    \textbf{CDL~\cite{Wang_CVPR_2012}}			& $70.9 \pm 3.2$		& $78.4$		& $85.3$		& $79.7$
    																	& $75.6$		& $85.8$		& $81.9$\\				
    \textbf{State-of-the-art}	
    &$78.2$    &$\bf 95.1$	&$96.3$    &$94.2$	&$92.3$	    &$95.4$	&$\bf 94.5$ \\     			
    \hline
    %\midrule
    \hline
    \textbf{NN-H}        & $77.3 \pm 4.5$  	       & $61.7 \pm 4.2$		& $69.2 \pm 4.0$		& $63.5 \pm 2.3$		& $66.6 \pm 5.0$ 	 
    & $64.2 \pm 4.2$	 & $64.0 \pm 3.5$	\\ 
	\textbf{NN-J}        & $76.7 \pm 5.1$  	       & $64.7 \pm 4.1$		& $69.5 \pm 3.3$		& $63.0 \pm 2.4$		
	& $65.5 \pm 5.1$     & $70.0 \pm 3.9$		   & $63.3 \pm 3.6$	\\
    %\midrule
	\hline
	\textbf{kFDA-HG}        & $78.9 \pm 3.4$  	       & $90.8 \pm 3.0$            	& $96.0 \pm 1.7$	& $92.9 \pm 1.9$
												   & $\bf 92.5 \pm 2.4$				& $95.8 \pm 1.7$	& $93.4 \pm 1.7$		\\			 
	\textbf{kFDA-HL}        & $78.6 \pm 4.7$  	       & $92.4 \pm 2.1$            	& $\bf 96.8 \pm 0.7$	& $\bf 94.7 \pm 1.1$
												   & $92.2 \pm 1.1$				& $\bf 96.6 \pm 0.8$	& $93.7 \pm 1.3$		\\				  	\textbf{kFDA-J}         & $\bf 79.4 \pm 3.8$  	   	   & $91.5 \pm 3.0$   		    & $95.9 \pm 1.7$	& $93.0 \pm 2.0$
					   						   & $92.5 \pm 2.5$				& $95.6 \pm 1.5$	& $93.5 \pm 1.6$		\\				     
	\hline%\midrule
	\textbf{NN-H-DR}      & $78.3 \pm 3.7$  	       & $71.1 \pm 4.0$		& $83.6 \pm 3.5$		& $77.1 \pm 4.1$		
	& $76.6 \pm 3.6$	& $76.4 \pm 4.5$		& $77.1 \pm 3.5$\\
	\textbf{NN-J-DR}      & $ 79.3 \pm 3.6$  	       & $72.3 \pm 2.9$		& $82.6 \pm 3.3$		& $75.6 \pm 3.1$		
	& $73.2 \pm 3.1$	& $81.8 \pm 2.7$		& $70.4 \pm 3.8$\\
    \hline%\bottomrule
  \end{tabular}
\end{table*}

%\def \DTSIZE {0.2}
%\begin{figure}[!tb]
%  \centering
%  \includegraphics[width=\DTSIZE \columnwidth,keepaspectratio]{}
%  \hfill
%  \includegraphics[width=\DTSIZE \columnwidth,keepaspectratio]{}
%  \hfill
%  \includegraphics[width=\DTSIZE \columnwidth,keepaspectratio]{}
%  \hfill  
%  \includegraphics[width=\DTSIZE \columnwidth,keepaspectratio]{}
%  \caption
%    {
%  %  \small
%    Samples from DynTex++.% (we used grayscale images in our experiments).
%    }
%  \label{fig:DynTex_example}
%\vspace{-0.3cm}
%\end{figure}

\subsection{Dynamic Texture Recognition}

As a second task, we consider the problem of dynamic texture recognition using the DynTex++ dataset~\cite{DynTex_Dataset}.
Dynamic textures are videos of moving scenes that exhibit certain stationary properties in the time domain~\cite{DynTex_Dataset}.
Such videos are pervasive in various environments, such as sequences of rivers, clouds, fire, swarms of birds and crowds.
DynTex++~\cite{DynTex_Dataset} is comprised of 36 classes,
each of which contains 100 sequences .%(see Fig.~\ref{fig:DynTex_example} for examples). 
We split the dataset into training and testing sets by randomly assigning half of the videos of each class to 
the training set and using the rest as query data. 
We used the LBP-TOP~\cite{LBPTOP:PAMI:2007} approach to represent each video sequence.
Table~\ref{tab:crr_texture} shows the average accuracies for 10 random splits.
To the best of our knowledge,~\cite{Rivera_TPAMI_2015} reported the highest accuracy on this dataset. 
Our kFDA-J approach yields a 1.4\% improvement over this state-of-the-art. As before, we can observe a large gap between our approach and GDA or CDL.

\begin{figure}[!tb]
  \begin{minipage}{1 \columnwidth}
    \centering
    \includegraphics[width=0.32 \columnwidth]{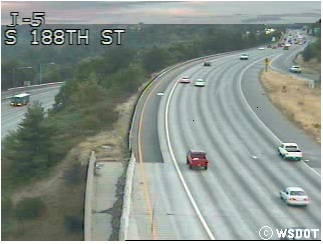}
    \hfill
    \includegraphics[width=0.32 \columnwidth]{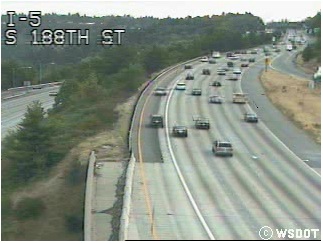}
    \hfill
    \includegraphics[width=0.32 \columnwidth]{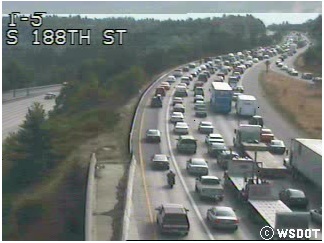}
    \vspace{-0.2cm}
    \caption
      {
      \small
      Representative examples of the three classes (light, medium, and heavy) in the UCSD traffic video dataset~\cite{Chan_CVPR2005}.
      }
    \label{fig:Traffic_example}
  \end{minipage}
\end{figure}

\setlength{\tabcolsep}{0.25em} % for the horizontal padding
\begin{table*}[!tb]	
	\footnotesize
	\centering
	\caption{\small	Average recognition rates on the DynTex++, UCSD traffic and Maryland (abbreviated as ML) scene datasets.    }
   \label{tab:crr_texture}
  \begin{tabular}{|l| c|c|c|c|}
    %\toprule
    \hline
    {\bf Method}   			
    &{\bf DynTex++}	 &{\bf Traffic} 	&{\bf ML-LOO}		&{\bf ML}\\
    %\toprule
    \hline
    \textbf{GDA~\cite{Hamm_ICML_2008}}			& $89.9 \pm 0.6$		& $92.5 \pm 2.6$		& $81.5$		& $70.3 \pm 5.2$ \\				
    \textbf{CDL~\cite{Wang_CVPR_2012}}			& $89.0 \pm 0.9$		& $91.7 \pm 1.9$		& $86.5$		& $76.7 \pm 7.8$	\\				
    \textbf{State-of-the-art}				    &$93.8$	    			& $95.6$		    	& $77.7$		& $NA$	\\     			
    \hline
    %\midrule
    \hline
    \textbf{NN-H}        	 & $91.6 \pm 0.7$		   & $91.3 \pm 4.2$	& $76.9$				& $71.2 \pm 3.1$	\\ 
	\textbf{NN-J}         	 & $91.4 \pm 0.7$		   & $91.0 \pm 4.5$	& $77.7$			& $71.4 \pm 3.0$	\\
    %\midrule
	\hline
	\textbf{kFDA-HG}        & $94.7 \pm 0.4$		& $96.1 \pm 1.5$  & $85.4$            	& $78.1 \pm 4.4$			\\			 
	\textbf{kFDA-HL}        & $94.9 \pm 0.7$		& $96.5 \pm 1.5$  & $\bf 87.7$          		& $79.0 \pm 3.1$			\\				  	\textbf{kFDA-J}          & $\bf 95.2 \pm 0.6$		& $\bf 97.3 \pm 1.4$  & $86.9$   		    & $77.8 \pm 5.3$			\\				     
	\hline%\midrule
	\textbf{NN-H-DR}      	& $92.3 \pm 0.5$		& $94.9 \pm 2.9$		& $80.8$			& $79.7 \pm 4.5$	\\
	\textbf{NN-J-DR}      	 & $91.9 \pm 0.5$		& $95.6 \pm 1.5$		& $82.3$			& $\bf 80.2 \pm 3.7$	\\
    \hline%\bottomrule
  \end{tabular}
\end{table*}

\subsection{Scene Classification}

For scene classification, we employed the UCSD traffic dataset~\cite{Chan_CVPR2005} and 
the Maryland~\cite{Shroff_CVPR_2010} scene recognition dataset.
The UCSD traffic dataset contains 254 video sequences of highway traffic of varying patterns (\ie, light, medium and heavy)
in various weather conditions (\eg, cloudy, raining, sunny). See Fig.~\ref{fig:Traffic_example} for examples.
We used HoG features~\cite{Dalal_CVPR_2005} to describe each frame. Our experiments
were performed using the splits provided with the dataset~\cite{Chan_CVPR2005}. We report the average accuracies over these splits in 
Table~\ref{tab:crr_texture}. The state-of-the-art results were reported in~\cite{Ravichandran_ACCV_2011}.
Once again, we see that our kernel-based and dimensionality reduction algorithms comfortably outperform GDA and CDL, as well as the previous state-of-the-art.

As a last experiment, we used the Maryland dataset, which contains 13 different classes of dynamic scenes, such as avalanches 
and tornados. See Fig.~\ref{fig:Maryland_example} for examples.
This dataset is more challenging, and we observed that handcrafted features, such as LBP or HoG, do not provide sufficiently discriminative representations. Therefore, we used the last layer of the CNN trained in~\cite{Xhou_NIPS_2014} as frame descriptors.
%Since the dimensionality of CNN features is way above what CDL is foreseen to work, 
We then reduced the dimensionality of the CNN output  to 400 using PCA. We first employed the standard Leave-One-Out (LOO) test protocol. Furthermore, we also evaluated the methods on 10 different training/query partitions obtained by randomly choosing 70\% of the dataset for training and the remaining 30\% for testing. The average classification accuracies for both protocols are reported in Table~\ref{tab:crr_texture}. Note that no state-of-the-art results has been reported in the literature on our second test protocol. Our approach outperforms the state-of-the-art result of Feichtenhofer~\cite{Feichtenhofer_CVPR_2014} by more than 7\%. 
While this may be attributed in part to the CNN features, note that our approach still outperforms GDA and CDL based on the same features.

\begin{figure}[!tb]
  \begin{minipage}{1 \columnwidth}
    \centering
    \includegraphics[width=0.32 \columnwidth]{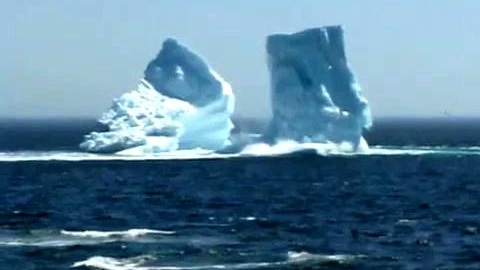}
    \hfill
    \includegraphics[width=0.32 \columnwidth]{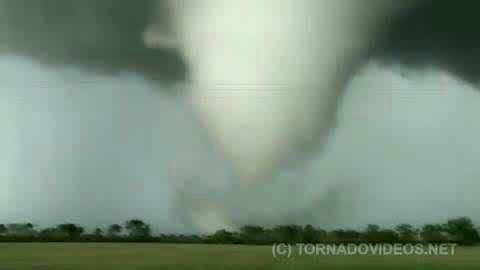}
    \hfill
    \includegraphics[width=0.32 \columnwidth]{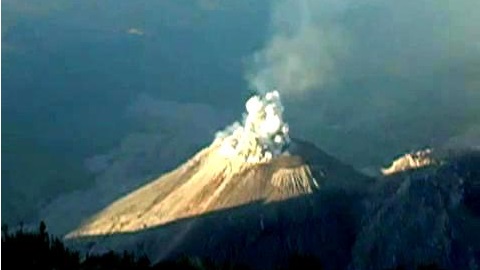}
    \caption
      {
      \small
      Representative examples of three classes of the Maryland scene dataset~\cite{Shroff_CVPR_2010}.
      From left to right: iceberg collapsing, tornado, and volcano eruption.
      }
    \label{fig:Maryland_example}
  \end{minipage}
\end{figure}

\section{Conclusions and Future Work}
\label{sec:conclusion}

We have introduced a novel framework to model and compare image-sets. Specifically, we have made use of KDE to represent an image-set with its PDF, and have proposed practical solutions to employ $f$-divergences for image-set matching, including empirical estimates of $f$-divergences, valid \emph{pd} kernels on the statistical manifold and a supervised dimensionality reduction algorithm inherently accounting for $f$-divergences in the resulting latent space.
%We showed how f-divergences can be estimated on the space of PDFs, \ie, statistical manifolds. 
%We also introduced valid positive definite
%kernels on the statistical manifolds and employed them to 
%achieve kernel-based classifiers for the purpose of image-set matching. Lastly, we proposed
%a supervised dimensionality technique that took into account the f-divergences between image-sets to determine its latent space.
In the future, we plan to extend our learning scheme to the unsupervised and semi-supervised scenarios. Furthermore, we intend to study the use and effectiveness of other divergences to tackle the problem of image-set matching.

\appendix
\section{Gradient for Dimensionality Reduction}

The cost function to be minimized by our dimensionality reduction approach, given in Eq.~15 and Eq.~16 of the main paper, can be expressed as
\begin{eqnarray}
\Mat{W}^* = &\underset{\Mat{W}}{\operatorname{argmin}}&\hspace{-0.2cm} 
\sum_{i,j} a({\Mat{X}_i,\Mat{X}_j}) \cdot \delta\hspace{-0.05cm}\left(\Mat{W}^T\Mat{X}_i,\Mat{W}^T\Mat{X}_j\right)
\label{eqn:DR_opt_prob} \\
& {\rm s.t.} &\hspace{-0.5cm}\Mat{W}^T \Mat{W} = \mathbf{I}_d\;,\notag
\end{eqnarray}
where $a(\cdot,\cdot)$ is an affinity measure that does not depend on $\Mat{W}$, $\delta: \mathcal{M} \times \mathcal{M} \to \mathbb{R}^+$ is either the Hellinger distance $\delta_H^2(\cdot,\cdot)$, or the Jeffrey divergence $\delta_{J}(\cdot,\cdot)$, and the sum is over all pairs of training image-sets. 

As shown in Eq.~17 of the main paper, to minimize this cost function by a gradient descent method on the Grassmannian, we need to compute the partial derivatives
${\partial \delta_H^2(\cdot,\cdot)}/{\partial \Mat{W}}$ or ${\partial \delta_J(\cdot,\cdot)}/{\partial \Mat{W}}$. Since both ${\delta_H^2(\cdot,\cdot)}$ and 
${\delta_J(\cdot,\cdot)}$ are expressed in terms of $T(\cdot)$ (defined in Eq.~9 of the main paper), we start by deriving $\partial T /\partial \Mat{W}$.
\begin{align}
\frac{\partial}{\partial \Mat{W}} T(\Mat{W}^T \Vec{x}) &= \frac{\partial}{\partial \Mat{W}}
\frac{p(\Mat{W}^T \Vec{x})}{p(\Mat{W}^T \Vec{x}) + q(\Mat{W}^T \Vec{x})} \notag \\
&= \frac{1}{\Big(p(\Mat{W}^T \Vec{x}) + q(\Mat{W}^T \Vec{x})\Big)^2} \bigg(
q(\Mat{W}^T \Vec{x}) \frac{\partial p(\Mat{W}^T \Vec{x})}{\partial \Mat{W}} - 
p(\Mat{W}^T \Vec{x}) \frac{\partial q(\Mat{W}^T \Vec{x})}{\partial \Mat{W}}
\bigg)\;. \label{eqn:grad_tx} 
\end{align}
From the definition of the empirical estimate of a distribution given in Eq.~4 of the main paper, the derivative of $\partial p(\Mat{W}^T \Vec{x})/ \partial \Mat{W} $ (and similarly for $\partial q(\Mat{W}^T \Vec{x})/ \partial \Mat{W} $) can be written as
\begin{align}
&\frac{\partial}{\partial \Mat{W}} p(\Mat{W}^T \Vec{x}) = \frac{\partial}{\partial \Mat{W}} \Bigg\{
\frac{1}{n_p\sqrt{\det(2\pi \Sigma_p)}}\sum_{i = 1}^{n_p} 
\exp \bigg(-\frac{1}{2} \Big(\Vec{x} - \Vec{x}_i^{(p)} \Big)^T \Mat{W}\Sigma_p^{-1} \Mat{W}^T\Big(\Vec{x} - \Vec{x}_i^{(p)} \Big)\bigg)
\Bigg\} \notag \\
&= \frac{-1}{n_p\sqrt{\det(2\pi \Sigma_p)}}\sum_{i = 1}^{n_p} \Bigg\{\exp \bigg(-\frac{1}{2} \Big(\Vec{x} - \Vec{x}_i^{(p)} \Big)^T \Mat{W}\Sigma_p^{-1} \Mat{W}^T\Big(\Vec{x} - \Vec{x}_i^{(p)} \Big)\bigg) \Big(\Vec{x} - \Vec{x}_i^{(p)} \Big)\Big(\Vec{x} - \Vec{x}_i^{(p)} \Big)^T
\Bigg\}\Mat{W}\Sigma_p^{-1} \label{eqn:grad_p}. 
\end{align}

According to the definition of the Hellinger distance given by Eq.~8 and Eq.~10 of the main paper, this lets us derive ${\partial \delta_H^2(\cdot,\cdot)}/{\partial \Mat{W}}$ as
\begin{small}
\begin{align}
	\frac{\partial}{\partial \Mat{W}}\delta_H^2(p \Vert q) &=  
	\frac{\partial}{\partial \Mat{W}}E_{p} \Bigg\{\bigg(\sqrt{T\big(\Mat{W}^T\Vec{x}\big)} - 
	\sqrt{1 - T\big(\Mat{W}^T\Vec{x}\big)}\bigg)^2 \Bigg\} \notag \\ &
	+\frac{\partial}{\partial \Mat{W}}E_{q} \Bigg\{\bigg(\sqrt{T\big(\Mat{W}^T\Vec{x}\big)} - 
	\sqrt{1 - T\big(\Mat{W}^T\Vec{x}\big)}\bigg)^2 \Bigg\}\notag\\
	&= E_{p} \Bigg\{ 2\bigg(\sqrt{T\big(\Mat{W}^T\Vec{x}\big)} - 
	\sqrt{1 - T\big(\Mat{W}^T\Vec{x}\big)}\bigg)
	\Bigg( \frac{1}{2\sqrt{T\big(\Mat{W}^T\Vec{x}\big)}} 
	+ \frac{1}{2\sqrt{1 - T\big(\Mat{W}^T\Vec{x}\big)}}\Bigg)
	\frac{\partial}{\partial \Mat{W}} T\big(\Mat{W}^T \Vec{x}\big)
	\Bigg\} \notag\\
	&+ E_{q} \Bigg\{ 2\bigg(\sqrt{T\big(\Mat{W}^T\Vec{x}\big)} - \sqrt{1 - T\big(\Mat{W}^T\Vec{x}\big)}\bigg)
	\Bigg( \frac{1}{2\sqrt{T(\Mat{W}^T\Vec{x}\big)}} + \frac{1}{2\sqrt{1 - T(\Mat{W}^T\Vec{x}\big)}}\Bigg)
	\frac{\partial}{\partial \Mat{W}} T(\Mat{W}^T \Vec{x}\big)
	\Bigg\} \notag \\
	&= E_{p} \Bigg\{ \sqrt{\frac{2T\big(\Mat{W}^T\Vec{x}\big)-1}{T\big(\Mat{W}^T\Vec{x}\big)\Big(1-T\big(\Mat{W}^T\Vec{x}\big)\Big)}} 
	\frac{\partial}{\partial \Mat{W}} T\big(\Mat{W}^T \Vec{x}\big)\Bigg\} \notag \\ &+
	E_{q} \Bigg\{ \sqrt{\frac{2T\big(\Mat{W}^T\Vec{x}\big)-1}{T\big(\Mat{W}^T\Vec{x}\big)\Big(1-T\big(\Mat{W}^T\Vec{x}\big)\Big)}} 
	\frac{\partial}{\partial \Mat{W}} T\big(\Mat{W}^T \Vec{x}\big)\Bigg\} \notag \\
	&= \frac{1}{n_p}\sum_i^{n_p} \Bigg\{ \sqrt{\frac{2T\big(\Mat{W}^T\Vec{x}_i^{(p)}\big)-1}
	{T\big(\Mat{W}^T\Vec{x}_i^{(p)}\big)\Big(1-T\big(\Mat{W}^T\Vec{x}_i^{(p)}\big)\Big)}} 
	\frac{\partial}{\partial \Mat{W}} T(\Mat{W}^T \Vec{x}_i^{(p)}\big) \Bigg\} \notag \\ &+
	\frac{1}{n_q}\sum_i^{n_q} \Bigg\{ \sqrt{\frac{2T\big(\Mat{W}^T\Vec{x}_i^{(q)}\big)-1}
	{T\big(\Mat{W}^T\Vec{x}_i^{(q)}\big)\Big(1-T\big(\Mat{W}^T\Vec{x}_i^{(q)}\big)\Big)}} 
	\frac{\partial}{\partial \Mat{W}} T\big(\Mat{W}^T \Vec{x}_i^{(q)}\big) \Bigg\}\;,
	\label{eqn:gradient_hellinger}
\end{align}
\end{small}
which depends on $\partial T /\partial \Mat{W}$ obtained from Eq.~\ref{eqn:grad_tx} and Eq.~\ref{eqn:grad_p} above.

Similarly, for the Jeffrey divergence defined in Eq.~11 of the main paper, we can obtain ${\partial \delta_J(\cdot,\cdot)}/{\partial \Mat{W}}$ as
\begin{small}
\begin{align}
	\frac{\partial}{\partial \Mat{W}}\delta_J(p \Vert q) &=  	
	\frac{\partial}{\partial \Mat{W}}E_{p} \Bigg\{ 
	\Big(2T\big(\Mat{W}^T\Vec{x}\big)-1\Big)
	\log \Bigg(\frac{T\big(\Mat{W}^T\Vec{x}\big)}{1-T\big(\Mat{W}^T\Vec{x}\big)}\Bigg) \Bigg\} \notag \\ &+ 
	\frac{\partial}{\partial \Mat{W}}E_{q} \Bigg\{ 
	\Big(2T\big(\Mat{W}^T\Vec{x}\big)-1\Big)
	\log \Bigg(\frac{T\big(\Mat{W}^T\Vec{x}\big)}{1-T\big(\Mat{W}^T\Vec{x}\big)}\Bigg) \Bigg\} \notag \\
	&= E_{p} \Bigg\{\Bigg(2 \log \Bigg(\frac{T\big(\Mat{W}^T\Vec{x}\big)}{1-T\big(\Mat{W}^T\Vec{x}\big)}\Bigg)
	+ 	\Big(2T\big(\Mat{W}^T\Vec{x}\big)-1\Big)
	\bigg(\frac{1}{T\big(\Mat{W}^T\Vec{x}\big)} + \frac{1}{1 - T\big(\Mat{W}^T\Vec{x}\big)}\bigg)\Bigg)
	\frac{\partial}{\partial \Mat{W}} T\big(\Mat{W}^T \Vec{x}\big)
	\Bigg\} \notag \\ &+
	E_{q} \Bigg\{\Bigg(2 \log \Bigg(\frac{T\big(\Mat{W}^T\Vec{x}\big)}{1-T\big(\Mat{W}^T\Vec{x}\big)}\Bigg)
	+ 	\Big(2T\big(\Mat{W}^T\Vec{x}\big)-1\Big)
	\bigg(\frac{1}{T\big(\Mat{W}^T\Vec{x}\big)} + \frac{1}{1 - T\big(\Mat{W}^T\Vec{x}\big)}\bigg)\Bigg)
	\frac{\partial}{\partial \Mat{W}} T\big(\Mat{W}^T \Vec{x}\big)
	\Bigg\} \notag \\	
	&= \frac{1}{n_p}\sum_i^{n_p} \Bigg\{\Bigg(2 \log \Bigg(\frac{T\big(\Mat{W}^T\Vec{x}_i^{(p)}\big)}
	{1-T\big(\Mat{W}^T\Vec{x}_i^{(p)}\big)}\Bigg)
	+ 	\frac{2T\big(\Mat{W}^T\Vec{x}_i^{(p)}\big)-1}{T\big(\Mat{W}^T\Vec{x}_i^{(p)}\big)\Big(1 - T\big(\Mat{W}^T\Vec{x}_i^{(p)}\big)\Big)}	
	\Bigg)\frac{\partial}{\partial \Mat{W}} T\big(\Mat{W}^T \Vec{x}_i^{(p)}\big)
	\Bigg\} \notag \\ &+
	\frac{1}{n_q}\sum_i^{n_q} \Bigg\{\Bigg(2 \log \Bigg(\frac{T\big(\Mat{W}^T\Vec{x}_i^{(q)}\big)}
	{1-T\big(\Mat{W}^T\Vec{x}_i^{(q)}\big)}\Bigg)
	+ 	\frac{2T\big(\Mat{W}^T\Vec{x}_i^{(q)}\big)-1}{T\big(\Mat{W}^T\Vec{x}_i^{(q)}\big)\Big(1 - T\big(\Mat{W}^T\Vec{x}_i^{(q)}\big)\Big)}	
	\Bigg)\frac{\partial}{\partial \Mat{W}} T\big(\Mat{W}^T \Vec{x}_i^{(q)}\big)\;.
	\Bigg\} 
	\label{eqn:gradient_jeffrey}
\end{align}
\end{small}
Using either of these derivatives lets us compute the gradient of the objective function on the Grassmann manifold from Eq.~17 of the main paper, and thus ultimately learn the mapping $\Mat{W}$ to a lower-dimensional subspace where the $f$-divergence reflects the chosen affinity measure.

{
\bibpunct{(}{)}{;}{a}{,}{,}
\bibliographystyle{natbib}
\bibliography{references}
}

\end{document}